\documentclass[runningheads]{llncs}

%
\usepackage{algorithm}
\usepackage{algorithmic}
\usepackage[hidelinks]{hyperref}
\usepackage{xcolor}
\usepackage{comment}

%
\usepackage{newfloat}
\usepackage{listings}
\lstset{%
	basicstyle={\footnotesize\ttfamily},
	numbers=left,numberstyle=\footnotesize,xleftmargin=2em,
	aboveskip=0pt,belowskip=0pt,%
	showstringspaces=false,tabsize=2,breaklines=true}
\floatstyle{ruled}
\newfloat{listing}{tb}{lst}{}
\floatname{listing}{Listing}
%
%
\pdfinfo{
/TemplateVersion (2022.1)
}
\usepackage[utf8]{inputenc}
\usepackage{amsmath} 
\usepackage{fancyvrb}
\usepackage{amsfonts} 
\usepackage{amssymb}
\usepackage{bm}
\usepackage{algorithm}
\usepackage{algorithmic}
\usepackage{comment}
\usepackage{tabularx}
\usepackage{makecell} 
 \usepackage{array,multirow,graphicx}
 \usepackage[misc]{ifsym}

\setcounter{secnumdepth}{0} 

%

\newcommand{\MAD}{\text{MAD }}


\newcommand\tsource{\ell}
\newcommand\ttarget{m}

\newcommand{\citet}[1]{\cite{#1}}

\makeatletter
\newcommand\footnoteref[1]{\protected@xdef\@thefnmark{\ref{#1}}\@footnotemark}
\makeatother

\linepenalty=5000
\everypar=\expandafter{\the\everypar\loosness=-1 }  

\def\X{{\bf X}}
\def\Y{{\bf Y}}

\def\x{{\bf x}}

\def\L{{\bf L}}
\def\R{{\mathbb{R}}}

\def\w{{\bf w}}
\def\GGs{{\boldsymbol\gamma}}
\def\GGt{{\bf \pi}}

\def\GGt{\boldsymbol\pi}
\DeclareMathOperator*{\argmin}{arg\,min}

\title{Match-And-Deform: Time Series Domain Adaptation through Optimal Transport and Temporal Alignment}
\titlerunning{Match-And-Deform: Time Series DA through OT and DTW}

\author {
    François Painblanc \inst{1} \and
     Laetitia Chapel \inst{2} \and
Nicolas Courty \inst{2} \and
Chloé Friguet \inst{2} \and
Charlotte Pelletier \inst{2} \and
Romain Tavenard \Letter \inst{1}
}
\institute{Université Rennes 2, LETG, IRISA, Rennes, France
\and
Université Bretagne Sud, IRISA, UMR CNRS 6074, Vannes, France}

\authorrunning{F. Painblanc, L. Chapel, N. Courty, C. Friguet, C. Pelletier, R. Tavenard}

\tocauthor{F. Painblanc, L. Chapel, N. Courty, C. Friguet, C. Pelletier, R. Tavenard}
\toctitle{Match-And-Deform: Time Series Domain Adaptation through Optimal Transport and Temporal Alignment}


\begin{document}

\maketitle

\begin{abstract}

While large volumes of unlabeled data are usually available, associated labels are often scarce. The unsupervised domain adaptation problem aims at exploiting labels from a source domain to classify data from a related, yet different, target domain.
When time series are at stake, new difficulties arise as temporal shifts may appear in addition to the standard feature distribution shift.
In this paper, we introduce the Match-And-Deform (MAD) approach that aims at finding correspondences between the source and target time series while allowing temporal distortions. The associated optimization problem simultaneously aligns the series thanks to an optimal transport loss and the time stamps through dynamic time warping. When embedded into a deep neural network, MAD helps learning new representations of time series that both align the domains and maximize the discriminative power of the network.
Empirical studies on benchmark datasets and remote sensing data demonstrate that MAD makes meaningful sample-to-sample pairing and time shift estimation, reaching similar or better classification performance than state-of-the-art deep time series domain adaptation strategies.
    
\end{abstract}

\medskip

\noindent\textbf{Keywords}: domain adaptation, time series, optimal transport, dynamic time warping.

\section{Introduction}
\label{sec:intro}

A standard assumption in machine learning is that the training and the test data are drawn from the same distribution. When this assumption is not met, trained models often have degraded performances because of their poor generalization ability. 
Domain adaptation (DA) is the branch of machine learning that tackles this generalisation problem when the difference in distribution can be expressed as a shift, allowing for improving task efficiency on a target domain by using all available information from a source domain. 
When dealing with time series data, that are ubiquitous in many real-world situations, being able to learn across time series domains is a challenging task as temporal deformations between domains might occur in addition to the feature distribution shift.

\begin{figure*}[!ht]
    \centering
    \includegraphics[width=\linewidth]{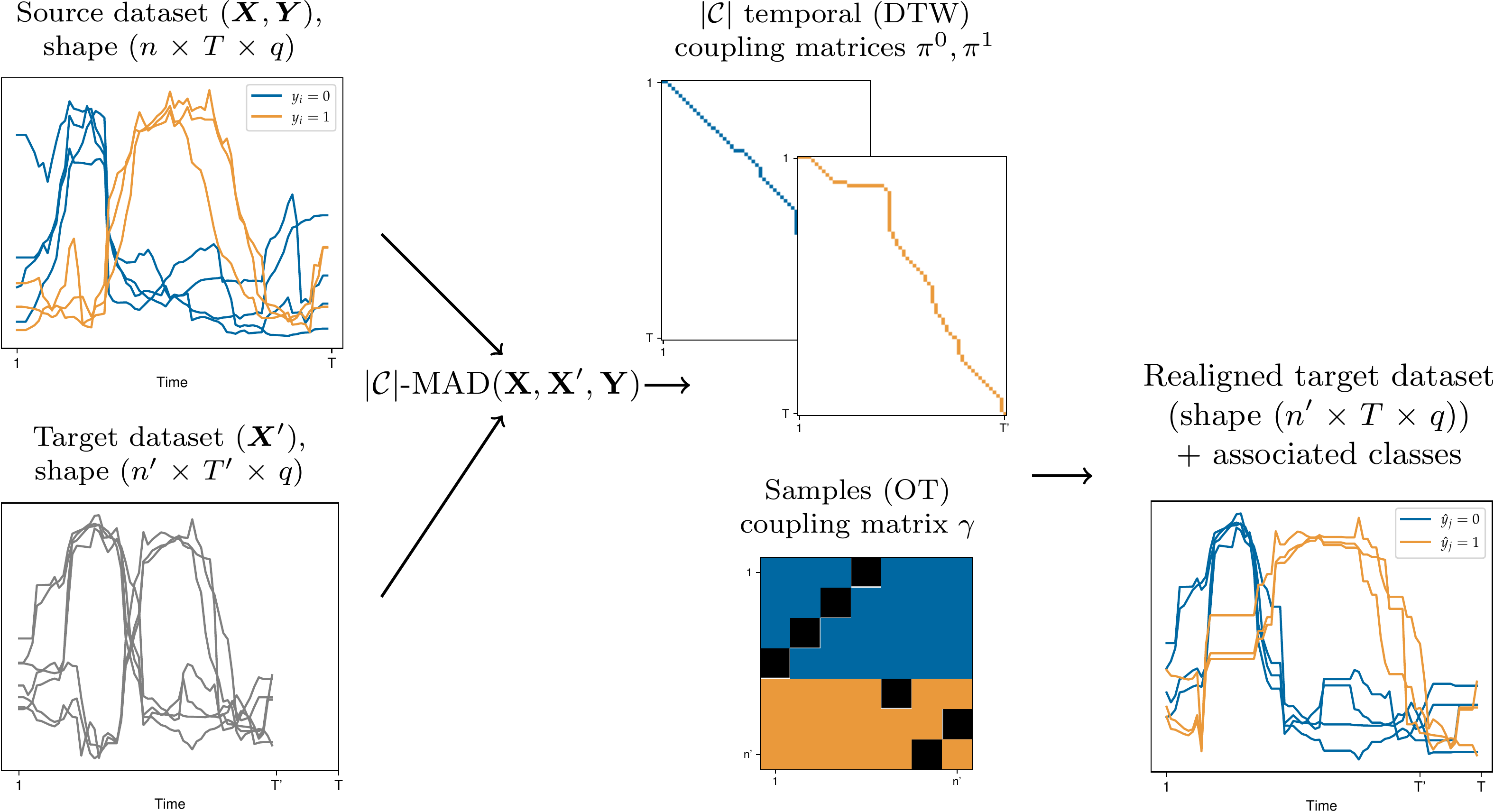}
    \caption{Match-And-Deform ($|\mathcal{C}|$-MAD) takes two time series datasets as inputs: a source (labelled) dataset and a target (unlabelled) dataset. 
    It jointly computes an optimal transport (OT) coupling matrix $\GGs$ and $|\mathcal{C}|$ class-wise dynamic time warping (DTW) paths $\{\GGt^{(c)}\}_{c \in \mathcal C}$.
    The OT cost is derived from the pairwise distances yielded by the DTW paths while the DTW cost is weighted by the OT plan.
    These outputs are then used to improve classification in the target dataset.}
    \label{fig:fig1}
\end{figure*}

Most of the literature in DA focuses on bridging the distribution shift, either by aligning distributions or by finding a common representation space~\cite{SurveyUDA,AdvDA}.
In unsupervised DA frameworks, training is performed on the source domain data relying on this common representation so that effective performance on the target domain can be expected.   
A standard approach consists in using adversarial training to push a deep neural network model into learning domain-invariant intermediate representations~\cite{DANN}.
This approach has been extended to the time series context by using dedicated network architectures.
In VRADA~\cite{VRADA}, variational recurrent neural networks are considered while in the state-of-the-art method CoDATS~\cite{wilson2020codats}, the feature extraction block is made of convolutional layers.
However, these methods operate on pooled features in which the time dimension has vanished and, as such, ignore the impact of potential temporal dynamics across multiple yet related time series. 

Optimal transport (OT) is an efficient tool
for DA, in both unsupervised and semi-supervised learning.
It can be used to evaluate the distribution shift as a non-linear function~\cite{JDOT} or, as done in more recent works, it can help design deep neural network losses that take into account the dissimilarity between the domains~\cite{deepjdot}.
OT-based DA methods have proven useful in many applications but do not encode any temporal coherence that should be kept when dealing with time series data.

The approach presented in this paper, coined Match-And-Deform (MAD), combines OT with dynamic time warping (DTW) to achieve time series matching and timestamp alignment.
This yields a dissimilarity measure that relies on a temporally realigned sample-to-sample matching. 
In other words, \MAD evaluates the feature distribution shift between domains up to a global temporal alignment.
\MAD can then be used as a loss function in a neural network to learn a domain-invariant latent representation.

In the following, the background on DTW and OT are introduced. 
MAD and its use as a loss for DA are then presented and experimentally compared to standard competitors, considering classification tasks on benchmark time series datasets but also on remote sensing time series datasets as an application framework. Finally, we discuss related works and present perspectives. 
\section{Background}
\label{sec:background}

In the sequel, we aim at finding a coupling between samples of two datasets, up to a global temporal shift.
For that purpose, we jointly solve two alignment problems: (i) a temporal realignment (of the timestamps) of the datasets and (ii) a matching between time series (samples). 
Both rely on related linear optimization problems that differ in the nature of the coupling involved. 

In a general way, the optimisation problem for comparing two objects (either time series or distributions) $\textbf{x}$ and $\textbf{x}^\prime$  can be stated as:
\begin{equation}
\label{eq:optimGeneral}
J\Bigl(\textbf{C}(\x,\x^\prime), \Pi\Bigr) = \argmin_{\GGt \in \Pi} \Big\langle \textbf{C}(\x,\x^\prime), \GGt \Big\rangle \, \,,
\end{equation}
in which $\Pi$ is a set of admissible couplings. A coupling will either be a temporal alignment if $\textbf{x}$ and $\textbf{x}^\prime$ are time series or a matching between samples if they are distributions, with appropriate constraint sets, as detailed later in this section.
The solution $J(\cdot,\cdot)$ of the optimization problem is called the optimal coupling matrix.
The cost matrix $\textbf{C}(\x,\x^\prime) = \left\{ d(x^i, x^{\prime j})\right\}_{ij}$ stores distances $d(x^i, x^{\prime j})$ between atomic elements $x^i$ and $x^{\prime j}$, respectively from $\x$ and $\x^\prime$.

\noindent
\textbf{Dynamic time warping (DTW)} is an instance of problem~\eqref{eq:optimGeneral} whose goal is to match two (multivariate) time series $\mathbf{x} \in \R^{T\times q}$, and $\x^\prime \in \R^{T^\prime \times q}$ as:
\begin{equation}
	\text{DTW}(\mathbf{x}, \x^\prime) = J\Big(\textbf{C}(\mathbf{x}, \x^\prime), \mathcal{A}(T, T^\prime)\Big) \, .
	\label{eq:DTW}
\end{equation}
Here, $\textbf{C}(\x,\x^\prime)$ stores Euclidean distances between $q$-dimensional atomic elements and $\mathcal{A}(T, T^\prime)$ is the set of admissible timestamps alignments between series of respective lengths $T$ and $T^\prime$.
An admissible alignment $\GGt \in \mathcal{A}(T, T^\prime)$ is a binary $T\times T'$-matrix that matches timestamps between time series $\mathbf{x}$ and $\x^\prime$.
Its non-zero entries should form a connected path between cells $(1, 1)$ and $(T, T^\prime)$.
This definition allows for efficient computation in quadratic time using dynamic programming~\cite{sakoe1978dynamic}.
Note that DTW can be seen as seeking a temporal transformation (in the form of repeated samples) such that the Euclidean distance between transformed series is minimized.

\noindent
\textbf{Optimal transport (OT)} is another instance of the same general optimization  problem~\eqref{eq:optimGeneral} that defines a distance between two probability measures. 
The discrete probability measures $\X$ and $\X^\prime$ are sets of weighted samples in $\mathbb R^q$: $\{(\mathbf{x}^{i},w^{i})\}_{i=1}^{n}$ with $\sum_{i} w^{i}=1$ and $ \{(\mathbf{x}^{\prime j},w^{\prime j})\}_{j=1}^{n^\prime}$ with $\sum_{j} w^{\prime j}=1$. 
When no prior information is available, weights are set uniformly.
OT defines a distance between $\X$ and $\X^\prime$ by seeking the transport plan $\GGs \in {\Gamma}(\mathbf{w}, \mathbf{w}^\prime)$ that minimises the transport cost:
\begin{equation}\label{eq:wd}
\text{OT}(\X,\X^\prime) = J\Big(\mathbf{C}(\X, \X^\prime), {\Gamma}(\mathbf{w}, \mathbf{w}^\prime)\Big). 
\end{equation}
$\Gamma(\mathbf{w}, \mathbf{w}^\prime)$ is the set of admissible transport plans, \emph{i.e.} the set of linear transports such that all the mass from $\X$ is transported toward all the mass of $\X^\prime$:
$$\Gamma(\mathbf{w}, \mathbf{w}^\prime) =\Big\{\GGs| \GGs \geq \mathbf{0}, \GGs \bm{1}_{n^\prime}=\mathbf{w}, \GGs^\top \bm{1}_{n}=\mathbf{w}^\prime\Big\}$$
with $\bm{1}_n$ a $n$-vector of ones.
The transport plan $\GGs$ is a $n \times n'$-matrix whose elements $\gamma_{ij}$ indicate the amount of mass \emph{transported} from $x^i$ to $x^{\prime j}$. The transport plan is sparse, with at most $n + n' -1$ elements that are non-zeros.  
The most common algorithmic tools to solve the discrete OT problem are borrowed from combinatorial optimisation and linear programming (see~\citet{peyre2019computational} for a thorough review).

\section{Match-And-Deform (MAD)}
\label{sec:mad}

In an unsupervised time series domain adaptation context, let us consider a source dataset $(\X,\Y) \in \mathbb R^{n \times T \times q} \times\mathcal C^n$ and a dataset from a target domain $\X^\prime \in \mathbb R^{n^\prime \times T^\prime \times q}$.  $\X=(x^{i}_1;\ldots;x^{i}_{T})_{i=1}^{n}$ and $\X^\prime=(x^{\prime j}_1;\ldots;x^{\prime j}_{T^\prime})_{j=1}^{n^\prime}$  represent multidimensional time series data ($q\geq 1$) with lengths $T$ and $T^\prime$, respectively. Both domains share the same label space $\mathcal C$ but only source labels $\Y$ are observed. Our goal is to classify target data, using knowledge transferred from source data. 

Despite its widespread use, DTW is limited to finding a temporal alignment between two single time series  $\mathbf{x} \in \R^{T\times q}$, and $\x^\prime \in \R^{T^\prime \times q}$, without any consideration for the alignment of sets of series. 
On the other hand, OT is designed to match datasets regardless of any temporal dimension.
A simple approach to combine both matchings is to use DTW as the inner cost $\mathbf{C}(\X, \X^\prime)$ of an OT problem. However, we argue that this would result in spurious matchings, due to the many degrees of freedom introduced by individual DTW computations (see Experiments section). 
Instead, we introduce a new metric, coined Match-And-Deform (MAD), that jointly optimizes a global DTW alignment and an OT coupling to match two sets of time series, as illustrated in Fig.~\ref{fig:fig1}.
Let us therefore define \MAD as: 
\begin{align}
 \label{eq:mad}
 \text{MAD}(\X, \X^{\prime}) =  &
      \argmin_{\substack{\GGs \in\Gamma(\w,\w^{\prime}) \\ \GGt\in\mathcal{A}(T, T^{\prime})}} \langle \L(\X,\X^{\prime}) \otimes \GGt, \GGs \rangle \nonumber \\
    = &\argmin_{\substack{\GGs \in\Gamma(\w,\w^{\prime}) \\ \GGt\in\mathcal{A}(T, T^{\prime})}} \sum_{i,j} \sum_{\tsource,\ttarget}   d(x^i_\tsource, x^{\prime j}_\ttarget) \pi_{\tsource\ttarget} \gamma_{ij}. 
\end{align}
Here, $\L(\X,\X^{\prime})$ is a 4-dimensional tensor whose elements are $L^{i,j}_{\tsource,\ttarget}=d(x^i_\tsource,x^{\prime j}_\ttarget)$, with $d : \mathbb{R}^q \times \mathbb{R}^q \rightarrow \mathbb{R}^+$ being a distance. 
$\otimes$ is the tensor-matrix multiplication. $\GGt$ is a global DTW alignment between timestamps and $\GGs$ is a transport plan between samples from $\X$ and $\X^{\prime}$. 

By combining metrics that operate between time series and samples in a single optimization problem, MAD gets the best of both worlds: it seeks a transport plan $\GGs$ that matches samples up to a global temporal transformation $\GGt$ (in the form of repeated samples, see the DTW presentation in Background section), hence defining a metric between datasets that is invariant to dataset-wide time shifts.
Note that MAD seeks for a global temporal alignment between datasets, which implies that all series inside a dataset have the same length, otherwise matching timestamps through $\GGt$ would be meaningless.

\label{sec:multi_mad}

The optimization problem in Eq.~\eqref{eq:mad} can be further extended to the case of distinct DTW mappings for each class $c$ in the source data. This results in the following optimization problem, coined $|\mathcal{C}|\text{-MAD}$: 
\begin{equation}
 \label{eq:multi_mad}
 |\mathcal{C}|\text{-MAD}(\X, \X^\prime, \Y) =
      \argmin_{\substack{\GGs \in\Gamma(\w,\w^{\prime}) \\ \forall c, \GGt^{(c)} \in\mathcal{A}(T, T^{\prime})}} \sum_{i,j} \sum_{\tsource,\ttarget} L^{i,j}_{\tsource,\ttarget}  
      \pi^{(y^i)}_{\tsource\ttarget} \gamma_{ij} \, .
\end{equation}
In that case, $|\mathcal{C}|$ DTW alignments are involved, one for each class $c$. $\GGt^{(y^i)}$ denotes the DTW matrix associated to the class $y^i$ of $x^i$.
This more flexible formulation allows adapting to different temporal distortions that might occur across classes.

\noindent \textbf{Properties.}
Let us now study some of the properties of this new similarity measure.
Our first property links Optimal Transport, MAD and $|\mathcal{C}|$-MAD problems.

\begin{property}\label{prop:prop1}
Let $\X \in \R^{n \times T \times q}$ and $\X^\prime \in \R^{n^\prime \times T^\prime \times q}$ be time series datasets.
Let $\text{OT}_\text{DTW}(\X, \X^\prime)$ be the solution of the OT problem with cost $\mathbf{C}(\X, \X^\prime) = \{\text{DTW}(\x,\x^\prime)\}_{\x,\x^\prime \in \X, \X^\prime}$, and let us denote $\text{cost}(\cdot)$ the cost associated to a solution of any optimization problem.
We have: 
\begin{equation*} 
    \text{cost}\Big(\text{OT}_\text{DTW}(\X, \X^\prime)\Big) \leq \text{cost}\Big(|\mathcal{C}|\text{-MAD}(\X, \X^\prime, \Y)\Big) \leq \text{cost}\Big(\text{MAD}(\X, \X^\prime)\Big)
\end{equation*}

\end{property}

Moreover, MAD inherits properties from the OT field.
Typically, the transport plans $\boldsymbol{\gamma}$ resulting from MAD and $|\mathcal{C}|$-MAD are sparse. If $n=n^\prime$ and uniform weights are considered, there always exists a transport plan solution of the MAD (resp. $|\mathcal{C}|$-MAD) problem that is a permutation matrix, as stated in the following property.
\begin{property}
Let $\X$ and $\X^\prime$ be datasets each composed of $n$ time series, and let us assume uniform weights, \emph{i.e.} $\w=\w^\prime=(1/n, \cdots, 1/n)$.
There exists a transport plan solution to the MAD (resp. $|\mathcal{C}|$-MAD) problem that is a one-to-one matching, \emph{i.e.} each sample from $\X$ is matched to exactly one sample in $\X^\prime$ (and conversely).
\end{property} 
Proofs for these properties are provided as Supplementary Material.

\noindent \textbf{Optimization.} \label{sec:optimMAD}
Let us consider the joint optimization problem introduced in Eq.~\eqref{eq:multi_mad}, which involves $|\mathcal{C}|$ finite sets of admissible DTW paths and a continuous space with linear constraints for the OT plan. Extension to solving the problem of Eq.~\eqref{eq:mad} is straightforward.
We perform a Block Coordinate Descent (BCD) to optimize the corresponding loss. 
BCD is a simple iterative algorithm for non-convex optimization problems in which one set of parameters is optimized while the others are held fixed at each step; this algorithm has already been used in similar contexts, e.g. in~\cite{coot_2020}.
In our case, one of the involved maps is optimized with all the other ones fixed, giving $|\mathcal{C}| + 1$ intertwined optimization problems in the form of Eq.~\eqref{eq:optimGeneral}. 
Indeed for a given set of DTW paths $\{\GGt^{(c)}\}_{c=1}^{|\mathcal{C}|}$, the problem in Eq.~\eqref{eq:multi_mad} boils down to a linear OT problem with a cost matrix defined as: 
\begin{equation}
    \label{eq:ot_cost}
    \textbf{C}_\text{OT}(\X, \X^{\prime}, \Y, \{\GGt^{(c)}\}_c) = \left\{\sum_{\tsource,\ttarget} L^{i,j}_{\tsource,\ttarget} 
    \pi^{(y^i)}_{\tsource \ttarget} \right\}_{i, j} .
\end{equation}
Similarly, for a fixed transport plan $\GGs$, solving the problem in Eq.~\eqref{eq:multi_mad} with respect to a given class-specific DTW path $\GGt^{(c)}$ is a DTW problem in which the cost matrix is:
\begin{equation}
    \label{eq:multi_dtw_cost}
    \textbf{C}_\text{DTW}^c(\X, \X^{\prime}, \GGs) = \left\{\sum_{i \text{ s.t. } y^i=c, j} L^{i,j}_{\tsource,\ttarget}
    \gamma_{ij}\right\}_{\tsource, \ttarget} .
\end{equation}
Hence, it leads to optimizing independently $|\mathcal{C}|$ DTW alignment problems.

\begin{algorithm}[t!]
   \caption{$|\mathcal{C}|-$MAD optimization via BCD}
   \label{alg:madbcd}
    \begin{algorithmic}[1]
        \STATE {\bfseries Input:} weighted time series datasets $\X$ and $\X^{\prime}$, initial DTW paths $\{\GGt^{(c)}\}$ 
        
        \REPEAT
            \STATE Compute $\textbf{C}_\text{OT}(\X, \X^{\prime}, Y, \{\GGt^{(c)}\}_c)$ using Eq.~\eqref{eq:ot_cost}
            \STATE $\GGs \gets J\Big(\textbf{C}_\text{OT}(\X, \X^{\prime}, Y, \{\GGt^{(c)}\}_c), \Gamma(\w, \w^{\prime})\Big)$ 
            \FOR{c = 1 .. $|\mathcal{C}|$}
                \STATE Compute $\textbf{C}^c_\text{DTW}(\X, \X^{\prime}, \GGs)$ using Eq.~\eqref{eq:multi_dtw_cost}
                \STATE $\GGt^{(c)} \gets J\Big(\textbf{C}^c_\text{DTW}(\X, \X^{\prime}, \GGs), \mathcal{A}(T, T^{\prime})\Big)$ 
            \ENDFOR
        \UNTIL{\bf convergence}
    \end{algorithmic}
\end{algorithm}
The resulting BCD optimization is presented in Algorithm~\ref{alg:madbcd}.
Note that in each update step, we get an optimal value for the considered map given the other ones fixed, hence the sequence of $|\mathcal{C}|$-MAD losses is both decreasing and lower bounded by zero, thus converging to a local optimum.

\section{Neural domain adaptation with a \MAD loss}

OT has been successfully used as a loss to measure the discrepancy between source and target domain samples embedded into a latent space. Similarly to DeepJDOT~\citet{deepjdot}, our proposal considers a deep unsupervised temporal DA model that relies on \MAD or $|\mathcal{C}|$-\MAD as a regularization loss function.

The deep neural network architecture of DeepJDOT is composed of two parts: (i) an embedding function $g_\Omega$ that maps the inputs into a given latent space, 
and (ii) a classifier $f_\theta$ that maps the latent representation of the samples into a label space shared by both source and target domains.
OT is applied on the output of the embeddings such that $g_\Omega$ yields a discriminant yet domain invariant representation of the data.
When dealing with temporal data, we minimize the following overall loss function  over $\{\GGt^{(c)}\}_c$, $\GGs$, $\Omega$ and $\theta$ :

\begin{equation}
\label{eq:mad-cnn-loss}
\begin{split}
    \mathcal{L}(\X, \Y, \X^{\prime}) =& 
        \overbrace{\frac{1}{n} \sum_{i} \mathcal{L}_{s}(y^{i}, f_\theta(g_\Omega(\x^{i})))}^{\text{(A)}} + 
        \\ 
        & \sum_{i, j} \gamma_{ij} \Big(  \alpha \underbrace{\sum_{\tsource, \ttarget} \pi_{\tsource\ttarget}^{(y^i)} 
        L\left(g_\Omega(\X), g_\Omega(\X^\prime)\right)^{i,j}_{\tsource, \ttarget}
    }_{\text{(B)}} + 
    \beta \underbrace{ \mathcal{L}_{t}(y^{i}, f_\theta(g_\Omega(\x^{\prime j})))}_{\text{(C)}} \Big)
\end{split}
\end{equation}
where $\mathcal{L}_{s}(\cdot, \cdot)$ and $\mathcal{L}_{t}(\cdot, \cdot)$ are cross entropy losses, $\GGs$ (resp. $\{\GGt^{(c)}\}_c$) is the transport plan (resp. the set of DTW paths) yielded by $|\mathcal{C}|\text{-MAD}$. 
The first part (A) of the loss is a classification loss on the source domain; (B) and (C) rely on the OT plan $\gamma_{ij}$:  (C) seeks to align the labels of the source time series with the predicted labels of their matched ($\gamma_{ij}>0$) target time series.
The difference with DeepJDOT lies in the term (B) of Eq.~\eqref{eq:mad-cnn-loss}. This part aims at aligning the latent representations of the time series from the source and target domains that have been matched, the main difference here is that the embeddings of $\x^{i}$ and $\x^{\prime j}$ are additionally temporally realigned thanks to the DTW mapping $\GGt^{(y^i)}$. Finally, $\alpha$ and $\beta$ are hyper-parameters to balance terms (B) and (C) in the global loss. 

This loss can be minimized to zero if (i) the classifier achieves perfect accuracy on source data, (ii) the distribution of features at the output of the embedding for the source and target domains match up to a global temporal alignment per class, and (iii) source labels and target predictions are matched by the MAD transport plan.

\noindent \textbf{Optimization.} 
The loss function described in Eq.~\eqref{eq:mad-cnn-loss} is optimized over two groups of parameters: (i) the neural network parameters $\theta$ and $\Omega$ and (ii)  MAD transport plan $\GGs$ and DTW paths $\{\GGt^{(c)}\}_c$.
Similar to what is done in~\citet{deepjdot}, we use an approximate optimization procedure that relies on stochastic gradients. 
For each mini-batch, a forward pass is performed during which mini-batch-specific $\GGs$ and $\{\GGt^{(c)}\}_c$ maps are estimated using Algorithm~\ref{alg:madbcd}, for fixed network blocks $g_\Omega$ and $f_\theta$.
While the DTW cost matrix presented in Eq.~\eqref{eq:multi_dtw_cost} remains unchanged, the OT cost gets an extra term from the cross-domain label mapping: 
\begin{equation*}
    \Bigg\{\alpha \sum_{\tsource, \ttarget} \pi^{(y^i)}_{\tsource \ttarget} 
    L\left(g_\Omega(\X), g_\Omega(\X^\prime)\right)^{i,j}_{\tsource, \ttarget}
    + \beta \mathcal{L}_{t}\Big(y^i, f_\theta(g_\Omega(\x^{\prime j})\Big) \Bigg\}_{i,j} \, .
\end{equation*}

To initialize the learning process, we use random DTW paths and, when repeating this process for consecutive mini-batches of data, we use the DTW paths from the previous mini-batch as initializers.
Note that even though stochastic optimisation can be expected to converge towards optimal parameters $\Omega$ and $\theta$, it is not the case for the \MAD parameters:  it is known from the OT literature~\citet{genevay2018learning} that the expected value of the OT plan over the mini-batches does not converge to the full OT coupling.
However, authors of~\citet{deepjdot} claim that the resulting non-sparse estimated plan can act as a regularizer to enforce mass sharing between similar samples.

\noindent \textbf{Complexity.}
Assuming time series lengths in both datasets are comparable, and relying on the sparsity of $\GGs$ and $\GGt^{(c)}$, computing the cost matrices $\textbf{C}_\text{OT}$ and $\{\textbf{C}^c_\text{DTW}\}_c$ can be done in $O(b^2 T e + b T^2 e)$ where $b$ is the number of series in a mini-batch and $e$ is the dimension of the embedding $g_\Omega(\cdot)$.
Given a pre-computed cost matrix, each DTW computation is of complexity $O(T^2)$ and similarly, the complexity for the OT solving step is $O(b^3 \log b)$.
The overall complexity for each iteration of Algorithm~\ref{alg:madbcd} is then $O(b^2 (b \log b + Te) + (be + |\mathcal{C}|) T^2)$.
We observe in practice that, in all our experiments, a few iterations were sufficient to reach convergence (characterised by DTW paths not evolving anymore).

\section{Experiments}
\label{sec:xp}

The use of \MAD and $|\mathcal{C}|$-MAD as losses for neural domain adaptation is now assessed considering a real remote sensing dataset, for which there exists a known global temporal shift between the (classes of the) domains due to different weather conditions. We further study its use in a  motion capture context.\footnote{Code, supplementary material and datasets are available at \url{https://github.com/rtavenar/MatchAndDeform}.}
 
\noindent
\textbf{Backbone architecture. } \label{sec:settings}
In order to evaluate the impact of the MAD regularization loss for domain adaptation, we use the exact same deep neural network architecture as in CoDATS \citet{wilson2020codats}. 
The feature extractor $g_\Omega$ is composed of a stack of 3 convolutional layers followed by batch normalization.
The classification head $f_\theta$ then consists of a global average pooling followed by a single fully-connected layer, as shown in Fig.~\ref{fig:madcnn}.
The first layer of the feature extractor has $128$ filters of size $8$, the second layer has $256$ filters of size $5$ and the third layer has $128$ filters of size $3$.
Note that, for CoDATS, the domain adversarial classification head is plugged after the pooling operator.

\begin{figure}[t]
    \centering
    \includegraphics[width=\linewidth]{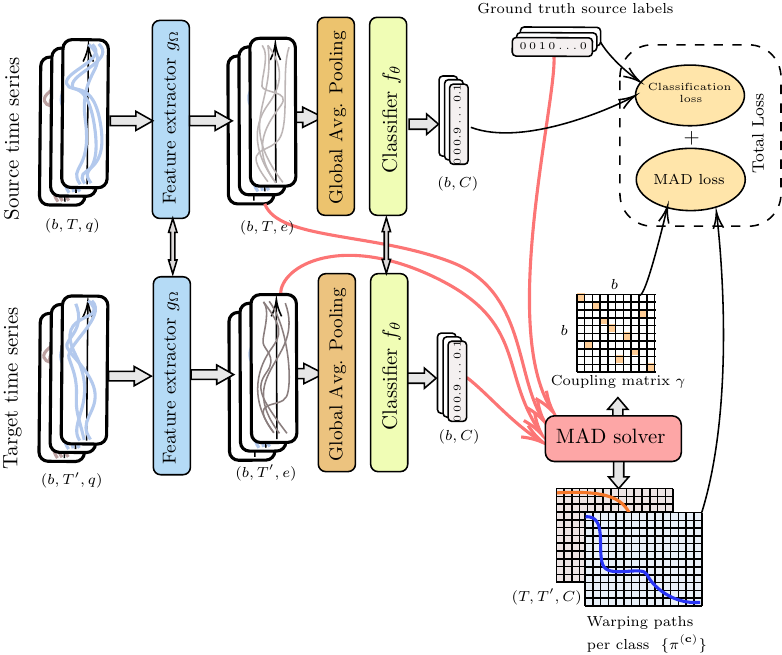}
    \caption{Backbone architecture and schematic view of the loss computation for $|\mathcal{C}|$-MAD.
    \label{fig:madcnn}
    }
\end{figure}

\noindent
\textbf{Hyper-parameters.} For a fair comparison, and for all the experiments, we choose a single learning rate ($0.0001$) and batch size ($b=256$) for both the CoDATS baseline and our approach. The MAD loss in Eq.~\eqref{eq:mad-cnn-loss} introduces two extra hyper-parameters; their values are set to $\alpha=0.01$ and $\beta = 0.01$ for all the experiments. 
In practice, we observe that a large range of values give similar performances (a deeper investigation is provided as Supplementary Material). 

\noindent \textbf{Experimental setup. }
Our proposed method \MAD and its variant $|\mathcal{C}|$-\MAD are compared to CoDATS, which extends the domain adversarial neural network (DANN, \citet{DANN}) framework to time series data and has been shown to reach state-of-the-art performance.
As in \citet{wilson2020codats}, the proportion of each class in the target domain is assumed to be known during the learning phase. This corresponds to the CoDATS-WS variant.
As an OT for DA baseline, we furthermore compare to DeepJDOT-DTW, considering DeepJDOT \cite{deepjdot} adapted for time series: individual DTW between pairs of series are considered as transport cost matrix $C(\boldsymbol{X}, \boldsymbol{X'}) = \{\text{DTW}(\boldsymbol{x}, \boldsymbol{x'})\}_{\boldsymbol{x}, \boldsymbol{x'}\in \boldsymbol{X}, \boldsymbol{X'}}$.

In MAD and DeepJDOT-DTW, the weights $\textbf{w}$ from the source mini-batches are set such that they reflect the proportion of the classes in the target domain; uniform weights $\textbf{w}^\prime$ are used for target mini-batches. 
Note that integrating this extra information could be avoided by using unbalanced optimal transport \citet{fatras2021unbalanced}, which is left for future work.
For the sake of fairness, all competing methods are trained and tested on the same data splits with the same backbone architecture. 

Classification accuracy from the same model trained on source data to predict the target labels (hereafter denoted \textit{No adaptation}) is reported for each domain pair. This gives a proxy of the overall difficulty of the adaptation problem for this pair.
The \textit{Target only} baseline is trained and evaluated on the target domain, and can be seen as an upper bound estimation of the classification accuracy.
For each domain, the data is split into train/validation/test sets with 64\%~/~16\%~/~20\% of the data in each split, respectively. Note that the validation set is used only for CoDATS-WS for early stopping purpose.
For each experiment, the average over all pairs of domains is provided together with the averaged standard deviations.

\noindent
\textbf{Remote Sensing data.}
\label{sec:expe:RS}
We first focus on an application field for which the temporal shift between domains has already been documented. In Earth observation, and especially for land cover mapping applications, the differences in weather, soil conditions or farmer practices between study sites are known to induce temporal shifts in the data that should be dealt with, as discussed in~\cite{nyborg2021timematch} for example.
Moreover, these temporal shifts can be global to the study sites or class specific (see Fig.~\ref{fig:NDVI}), which is the setup for which \MAD and $|\mathcal{C}|$-\MAD are designed.

\begin{figure}[t!]
    \centering
    \includegraphics[width=\linewidth]{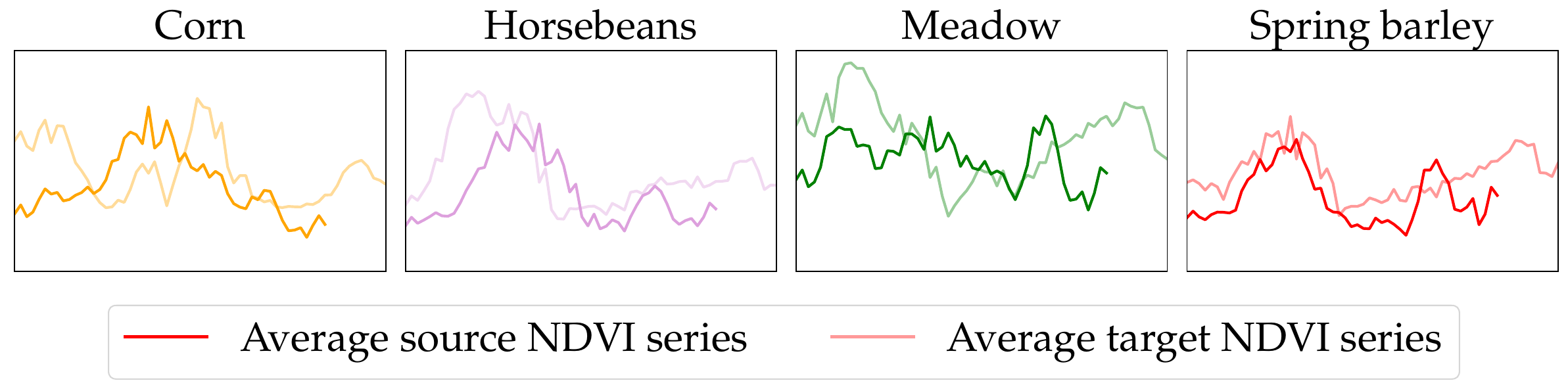}
    \caption{Per-class median-filtered average Normal Difference Vegetation Index (NDVI) profiles, showing the growth of 4 crop types (miniTimeMatch dataset, 1 domain pair (DK1 $\rightarrow$ FR1)). Crops develop similarly in the 2 considered regions, but the patterns are temporally shifted, with a class-specific shift.}
    \label{fig:NDVI}
\end{figure}

TimeMatch dataset~\cite{nyborg2021timematch} is a crop-type mapping dataset covering four different geographical areas in Austria, Denmark, and France. It contains time series of multi-spectral measurements  from satellite imagery at each geo-location. Labels are available at a parcel level: hence we  consider a modified version of TimeMatch, referred to as miniTimeMatch in the following, in which measurements are averaged over agricultural parcels. 
The aim is to recognize parcels' crop type, such as corn or wheat. Domains correspond to different geographical areas, with their own characteristics.
Moreover, we remove data from classes that are not observed in all domains, or with less than $200$ agricultural fields in at least one domain. {We also remove the "unknown" class that gathers crops from many types.}
Finally, we sample $n_\text{min}$ series per class in each domain, where $n_\text{min}$ is the minimum frequency of the class across all domains.
This results in a dataset of 
{$28,858$} time series and {$8$} classes per domain, each being described by $10$ features per timestamp. 
The same domain pairs as in \cite{nyborg2021timematch} are considered for the evaluation.

Observations from the miniTimeMatch dataset may be noisy, with some measurements corresponding to cloudy observations. Moreover, the time series of the different domains have different temporal sampling due to the filtering of highly cloudy images (see Figure~\ref{fig:NDVI}). 
Then, we introduce a new domain adaptation problem, considering another land crop mapping dataset called TarnBrittany, in which observations from cloudy images are removed.
It is built from Sentinel-2\footnote{\url{https://theia-ide.org/}} time series for the year 2018, and is composed of two domains: (Brittany) in the mid-west of France on tile T30UVU and (Tarn) in the southwest of France on tile T31TDJ.
We collect 72 images for Tarn and 58 images for the cloudier Brittany.
Images have been orthorectified and corrected from atmospheric effects by using the MAJA processing chain~\cite{lonjou2016maccs} to yield Level-2A surface reflectance images. 
As for miniTimeMatch, we average the reflectance values over the agricultural fields to obtain multivariate time series of 10 spectral bands.
Labels are obtained from the French land parcel identification system (IGN RPG\footnote{\url{https://geoservices.ign.fr/rpg}}) for the year 2018.
We keep only the five main crops present in both areas. A total of 5,743 series is sampled in each area with similar class proportions in source and target domains. 
The two domain pairs are considered for the evaluation.

\noindent
\textbf{Motion Capture (MoCap) data.} \label{sec:expe:MoCap}
We further evaluate \MAD and $|\mathcal{C}|$-\MAD on a MoCap setup: Human Activity Recognition (HAR) dataset~\citet{Dua2019}, in which no temporal shift is documented. As such, we do not expect \MAD and $|\mathcal{C}|$-\MAD to behave better than state-of-the-art algorithms.
The dataset contains time series (128 timestamps, 9 features, 6 classes) of movements recorded thanks to mobile sensors such as accelerometer or gyroscope; the aim is to recognize human activities such as walking or sitting.
Domains correspond to different persons with their own characteristics. The same 10 domain pairs as in~\citet{wilson2020codats} are considered for the evaluation.
Details on the dataset are provided as Supplementary Material. 

\noindent
\textbf{Results} Tab.~\ref{tab:tab_acc_mocap} shows that, on HAR dataset,  \MAD and $|\mathcal{C}|$-\MAD reach similar performance to CoDATS-WS. In more details, \MAD and $|\mathcal{C}|$-\MAD outperform  CoDATS-WS in five out of ten situations, achieving a slightly better average performance.  
Tab.~\ref{tab:acc_timematch} provides classification accuracy for remote sensing datasets, on which \MAD and $|\mathcal{C}|$-\MAD are expected to work better thanks to their ability to handle (per class) time shifts.
The discrepancies between performances of models trained only on source data (\textit{No adaptation}) and those only trained on target data (\textit{Target only}) also point out that all domain pairs constitute a more challenging adaptation problem compared to HAR dataset.
One can notice that \MAD and $|\mathcal{C}|$-\MAD outperform CoDATS-WS in 5 out of the 7 problems, sometimes with an important improvement (see DK1 $\rightarrow$ FR1 for  example), even in the presence of noise (that is present in the miniTimeMatch dataset).  
This illustrates the fact that \MAD and $|\mathcal{C}|$-\MAD are of prime interest when global or class-specific temporal deformations occur between domains (see Fig.~\ref{fig:NDVI}).
For each set of datasets, a one-sided Wilcoxon signed-rank test is performed to assert if the results of \MAD and $|\mathcal{C}|$-\MAD are statistically greater than those of CoDATS-WS and deepJDOT-DTW. 
Significant results ($\alpha=0.05$) are obtained for \MAD and $|\mathcal{C}|$-\MAD over CoDATS-WS ($p=0.005$ and $p=0.01$, respectively), and over DeepJDOT-DTW ($p= 3.10^{-6}$ and $p=6.10^{-6}$, respectively), confirming the superiority of our approaches. There is no significant difference ($p=0.18$) between CODATS-WS and deepJDOT-DTW performances. 

\begin{table*}[t!]
  \centering
  \begin{tabular}{|r|c|c|c|c|c|c|}
    \hline
    Problem             & No adapt.      &  CoDATS-WS                    & DeepJDOT-DTW    & \MAD{} & $|\mathcal{C}|$-MAD & Target only \\ \hline \hline
    ~2 $\rightarrow$ 11 & $83.3 \pm 0.7$ &  $81.3  \pm 4.4$              & \boldsymbol{$99.5 \pm 0.7$} & $98.4 \pm 1.3$ & $99.0 \pm 1.5$ & $100 \pm 0.0$ \\
   ~7 $\rightarrow$ 13  & $89.9 \pm 3.6$ &  $94.4  \pm 2.6$              & $88.4 \pm 0.7$ & \boldsymbol{$100.0 \pm 0.0$} & \boldsymbol{$100 \pm 0.0$} & $100 \pm 0.0$ \\
   12 $\rightarrow$ 16  & $41.9 \pm 0.0$ &  \boldsymbol{$64.0  \pm 0.6$} & $56.8 \pm 3.8$ & \boldsymbol{$64.0 \pm 0.6$} & \boldsymbol{$64.0 \pm 0.6$} & $100 \pm 0.0$ \\
   12 $\rightarrow$ 18  & $90.0 \pm 1.7$ &  \boldsymbol{$100 \pm 0.0$}   & $94.5 \pm 0.0$ & $99.5 \pm 0.6$ & $99.5 \pm 0.7$ & $100 \pm 0.0$ \\
   ~9 $\rightarrow$ 18  & $31.1 \pm 1.7$ &  \boldsymbol{$78.1  \pm 4.9$} & $51.6 \pm 10.0$ & $71.7 \pm 0.6$ & $71.2 \pm 1.1$ & $100 \pm 0.0$ \\
   14 $\rightarrow$ 19  & $62.0 \pm 4.3$ &  \boldsymbol{$99.5 \pm 0.7$}  & $64.4 \pm 1.3$ & $83.3 \pm 2.3$ & $84.3 \pm 2.4$ & $100 \pm 0.0$ \\
   18 $\rightarrow$ 23  & $89.3 \pm 5.0$ &  $89.8  \pm 0.6$              & $97.3 \pm 0.0$ & \boldsymbol{$98.2 \pm 0.6$} & $97.8 \pm 0.6$ & $100 \pm 0.0$ \\
   ~6 $\rightarrow$ 23  & $52.9 \pm 2.3$ &  $95.1  \pm 3.5$              & $60.9 \pm 2.7$ & \boldsymbol{$97.8 \pm 0.6$} & \boldsymbol{$97.8 \pm 0.6$} & $100 \pm 0.0$ \\
   ~7 $\rightarrow$ 24  & $94.4 \pm 2.7$ &   $99.6 \pm 0.6$               & $98.3 \pm 1.2$ & \boldsymbol{$100.0 \pm 0.0$} & \boldsymbol{$100 \pm 0.0$} & $100 \pm 0.0$ \\
   17 $\rightarrow$ 25  & $57.3 \pm 5.5$ &   {$95.5  \pm 4.6$} & $73.6 \pm 3.2$ & \boldsymbol{$96.3 \pm 2.0$} & $95.5 \pm 1.2$ & $100 \pm 0.0$ \\ \hline
   Average              & $69.2 \pm 2.8$ &  $89.7 \pm 2.3$               & $78.5 \pm 2.4$ & \boldsymbol{$90.9 \pm 0.9$} & \boldsymbol{$90.9 \pm 0.9$}  & $100 \pm 0.0$ \\ \hline

\end{tabular}
  \caption{HAR dataset: classification performance (\% accuracy, avg over 3 runs $\pm$ std).  
  \label{tab:tab_acc_mocap}}

\end{table*}

\begin{table*}[t!]
    \centering
    \hspace*{-0.5cm}
   \begin{tabular}{|r|c|c|c|c|c|c|}
    \hline
    Problem 		      & No adapt.  & CoDATS-WS      & DeepJDOT-DTW   & \MAD           & $|\mathcal{C}|$-MAD         & Target only \\ 
    \hline \hline
    Tarn $\rightarrow$ Brittany & $88.5 \pm 3.7$ & $96.0 \pm 1.6$ & $89.4 \pm 2.2$ & $98.8 \pm 0.3$ & \boldsymbol{$98.9 \pm 0.4$} & $99.7 \pm 0.1$ \\
    Brittany $\rightarrow$ Tarn & $48.9 \pm 0.6$ & \boldsymbol{$93.6 \pm 0.1$} & $47.6 \pm 1.4$ & $92.0 \pm 0.2$ & $90.6 \pm 1.1$ & $98.6 \pm 0.2$ \\ \hline
    Average                     & $68.7 \pm 1.7$ & $94.8 \pm 0.9$ & $68.5 \pm 1.8$ & \boldsymbol{$95.4 \pm 0.2$} & $94.7 \pm 0.8$ & $99.2 \pm 0.1$ \\ 
    \hline \hline
    DK1 $\rightarrow$ FR1 & $69.2 \pm 1.3$ & $74.8 \pm 1.5$ & $ 79.0 \pm 1.6	$ & \boldsymbol{$88.4 \pm 0.4 $} & $ 88.3 \pm 0.9$ & $95.8 \pm 0.9 $ \\
    DK1 $\rightarrow$ FR2 & $62.2 \pm 3.5$ & $\boldsymbol{87.0 \pm 3.4}$ & $76.6\pm 2.5 	$ & $ 82.5 \pm 1.1 $ & $ 81.0 \pm 1.1$ & $94.2 \pm 1.7$ \\
    DK1 $\rightarrow$ AT1 & $73.9 \pm 0.2$ & $71.6 \pm 15.4$ & $ 78.6 \pm 0.6	$ & \boldsymbol{$ 93.1 \pm 1.2 $} & $92.3 \pm 2.2$ & $96.7 \pm 0.7$ \\
    FR1 $\rightarrow$ DK1 & $61.9 \pm 5.2$ & $78.0 \pm 10.7$ & $71.3 \pm 2.9$& \boldsymbol{$ 88.2 \pm 0.3 $} &\boldsymbol{$ 88.2 \pm 0.5 $}&  $96.2 \pm 0.3$ \\
    FR1 $\rightarrow$ FR2 & $78.8 \pm 0.9$ & $82.1 \pm 8.2$ & $ 77.1 \pm 1.3	$ & \boldsymbol{$ 90.5 \pm 0.2$} & {$ 89.6 \pm 0.4$} & $94.2 \pm 1.7$ \\ \hline
    \hline 
    Average & $ 69.2 \pm 2.2$ & $78.7 \pm 7.8$ & $76.5 \pm 1.8$ & \boldsymbol{$88.5 \pm 0.6$} &	$87.9 \pm 1.0$ & $ 95.4 \pm 1.1$ \\
    \hline 
    \end{tabular}
    \caption{Remote sensing datasets: classification performance (\% accuracy, avg over 3 runs $\pm$ std).
    \label{tab:acc_timematch}}
\end{table*}

\noindent \textbf{$|\mathcal{C}|$-\MAD as a trade-off between \MAD and DeepJDOT-DTW.}
$|\mathcal{C}|$-\MAD\ can be seen as an intermediate configuration between MAD (in which a single global DTW alignment is performed) and DeepJDOT-DTW (which allows individual alignments between each pair of series).
The good performance of $|\mathcal{C}|$-\MAD compared to DeepJDOT-DTW show that it successfully manages the specific intra-class global alignments.
By computing pair-to-pair DTW alignments, DeepJDOT-DTW allows for too many degrees of freedom that lead to spurious matchings. On the contrary, ($|\mathcal{C}|$-)\MAD acts as a regularizer (see Property~\ref{prop:prop1}), allowing to meaningfully constrain the alignments for better performance.

\noindent \textbf{Latent space visualization.}
To further investigate the internal properties of the models, we use Multi-Dimensional Scaling (MDS) to visualize the latent space of the compared models.
To do so, we focus on the models learned for the ``Tarn $\rightarrow$ Brittany'' adaptation problem and visualize $g_\Omega(\X)$ and $g_\Omega(\X^\prime)$ jointly for a model trained on source data only, as well as for CoDATS-WS and $|\mathcal{C}|$-\MAD that aim at learning a domain-invariant representation.
While Euclidean distance is used to feed MDS for both baselines, we use the distances resulting from the DTW paths for $|\mathcal{C}|$-\MAD (as stored in $\textbf{C}_\text{OT}$), in order to account for the temporal realignment on which our method relies.
The resulting visualizations presented in Fig.~\ref{fig:latent_codats} show a significant shift in distributions from target to source domain, when no DA strategy is employed.
CoDATS-WS also fails to align domains in the presence of time shifts, whereas $|\mathcal{C}|$-\MAD successfully clusters series by class.

\begin{figure}[!t]
    \centering
    \includegraphics[width=.7\linewidth]{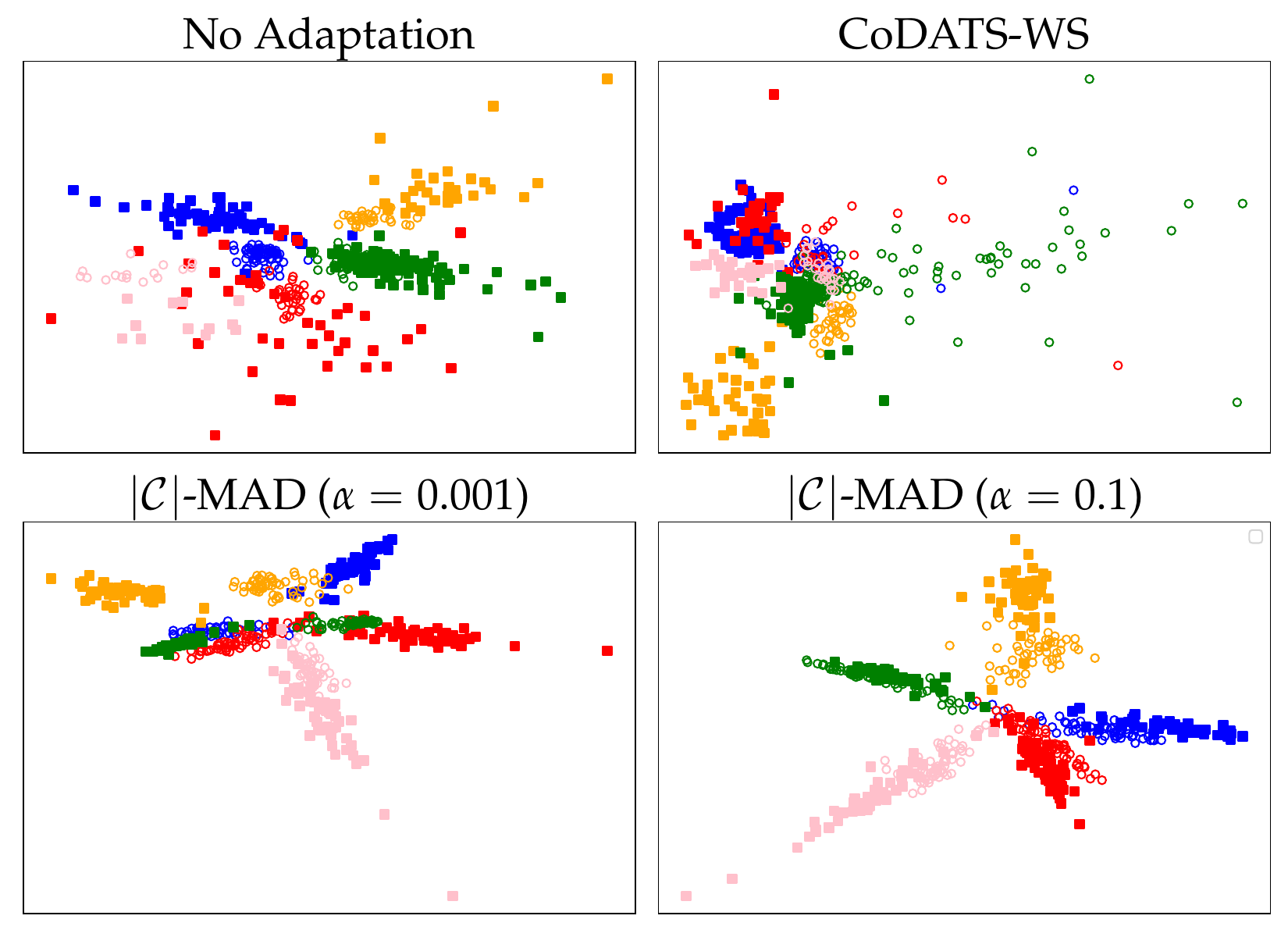}
    \caption{Multi-Dimensional Scaling (MDS) representation of the latent space for several models on the Tarn (empty circles)~$\rightarrow$~Brittany (squares) adaptation problem: each colour represents a different class.
    }
    \label{fig:latent_codats}
\end{figure}

\section{Related work}
\label{sec:rel}

Several approaches proposed in the literature share some common ground with MAD, either because they aim at aligning complex objects to perform domain adaptation tasks or because they deal with the problem of aligning (sets of) time series.

Using OT for domain adaptation has been initially proposed in OTDA~\cite{OTDA}, which aims at aligning the source and target domains using OT. It relies on the hypothesis that the distribution shift between the two domains is an affine transformation and uses the transport plan to evaluate its coefficients. Classification is then performed on the realigned samples. 
DeepJDOT~\cite{deepjdot}, on which the neural domain adaptation with a \MAD loss introduced in this paper relies, can be seen as an improvement over OTDA. 
CO-OT~\cite{coot_2020} jointly optimizes two OT plans $\GGs_s$ and $\GGs_f$ between samples (rows) and features (columns) of datasets $\X$ and $\X^\prime$, resulting in a loss of the form $\langle \L(\X,\X^\prime) \otimes \GGs_s, \GGs_f \rangle$. It has been benchmarked in a heterogeneous domain adaptation context but does not allow enforcing temporal coherence that should be met when time series are at stake.
To the best of our knowledge, no OT-based approach has yet tackled the challenge of unsupervised DA for time series classification.

Combining OT and (soft-)DTW has also been proposed in~\citet{Janati2020} where the goal is to align brain imaging signals using soft-DTW with an OT-based ground cost.
In this setting however, optimization is not performed jointly on $\GGs$ and $\GGt$ but, rather, individual transport problems are solved independently for each pair of timestamps, hence not enforcing global spatial consistency. \citet{Janati2022} propose an extension with the aim to average spatio-temporal signals.
However, none of these works consider the DA scenario.

In the same line of work, \citet{cohen2021aligning} use a Gromov-Wasserstein-like similarity measure to compare time series that may not lie in the same ambient space (\textit{e.g.} that do not share the same features). 
They solve a problem of the form $\langle \L(\X,\X^\prime) \otimes \GGt, \GGt \rangle$ (in which $\GGt$ is a coupling matrix that meets the DTW alignment constraints), coined Gromov-dynamic time warping. 
Optimization also relies on a BCD, however its convergence is not guaranteed.
Moreover, it does not allow taking into account the temporal distortion that occurs between \textit{sets} of time series.

\section{Conclusion and perspectives}\label{sec:conclu}
In this paper, we introduce Match-And-Deform (MAD) that combines optimal transport and dynamic time warping to match time series across domains given global temporal transformations. We embed \MAD as a regularization loss in a neural domain adaptation setting and evaluate its performance in different settings: on MoCap datasets, \MAD gives similar performance to state-of-the-art DA time series classification methods; when a temporal deformation is at stake (such as in a remote sensing scenario), we show that \MAD reaches better performance, thanks to its ability to capture temporal shifts. 

The $|\mathcal{C}|$-MAD variant presented in this paper relies on source domain classes to form groups of temporally coherent series, yet an unsupervised strategy to form groups based on time series content alone is a worthy track for future works.
Moreover, inspired by~\cite{vincent-cuaz2022semirelaxed}, time series weights could be learned instead of set \emph{a priori}, in order to decrease the sensitivity to outliers.
Finally, the use of \MAD in other tasks, such as cross-domain missing data imputation~\cite{muzellec2020missing}, can also be considered.

\subsubsection*{Acknowledgements.}
François Painblanc and Romain Tavenard are partially funded through project MATS ANR-18-CE23-0006. 
Nicolas Courty is partially funded through
project OTTOPIA ANR-20-CHIA-0030. Laetitia Chapel is partially funded through project MULTISCALE ANR-18-CE23-0022.

\bibliographystyle{splncs04}  
\bibliography{biblio.bib}

\begin{thebibliography}{10}
\providecommand{\url}[1]{\texttt{#1}}
\providecommand{\urlprefix}{URL }
\providecommand{\doi}[1]{https://doi.org/#1}

\bibitem{cohen2021aligning}
Cohen, S., Luise, G., Terenin, A., Amos, B., Deisenroth, M.: Aligning time
  series on incomparable spaces. In: International Conference on Artificial
  Intelligence and Statistics. pp. 1036--1044 (2021)

\bibitem{OTDA}
Courty, N., Flamary, R., Tuia, D., Rakotomamonjy, A.: Optimal transport for
  domain adaptation. IEEE Transactions on Pattern Analysis and Machine
  Intelligence  \textbf{39},  1853--1865 (2016)

\bibitem{JDOT}
Courty, N., Flamary, R., Habrard, A., Rakotomamonjy, A.: Joint distribution
  optimal transportation for domain adaptation. Advances in Neural Information
  Processing Systems  \textbf{30} (2017)

\bibitem{deepjdot}
Damodaran, B.B., Kellenberger, B., Flamary, R., Tuia, D., Courty, N.:
  {DeepJDOT}: Deep joint distribution optimal transport for unsupervised domain
  adaptation. In: European Conference on Computer Vision. pp. 447--463 (2018)

\bibitem{Dua2019}
Dua, D., Graff, C.: {UCI} machine learning repository (2017),
  \url{http://archive.ics.uci.edu/ml}

\bibitem{fatras2021unbalanced}
Fatras, K., S{\'e}journ{\'e}, T., Flamary, R., Courty, N.: Unbalanced minibatch
  optimal transport; applications to domain adaptation. In: International
  Conference on Machine Learning. pp. 3186--3197 (2021)

\bibitem{DANN}
Ganin, Y., Ustinova, E., Ajakan, H., Germain, P., Larochelle, H., Laviolette,
  F., Marchand, M., Lempitsky, V.: Domain-adversarial training of neural
  networks. The Journal of Machine Learning Research  \textbf{17}(1),
  2096--2030 (2016)

\bibitem{genevay2018learning}
Genevay, A., Peyr{\'e}, G., Cuturi, M.: Learning generative models with
  {S}inkhorn divergences. In: International Conference on Artificial
  Intelligence and Statistics. pp. 1608--1617 (2018)

\bibitem{Janati2020}
Janati, H., Cuturi, M., Gramfort, A.: Spatio-temporal alignments: Optimal
  transport through space and time. In: International Conference on Artificial
  Intelligence and Statistics. pp. 1695--1704 (2020)

\bibitem{Janati2022}
Janati, H., Cuturi, M., Gramfort, A.: Averaging spatio-temporal signals using
  optimal transport and soft alignments. arXiv:2203.05813  (2022)

\bibitem{lonjou2016maccs}
Lonjou, V., Desjardins, C., Hagolle, O., Petrucci, B., Tremas, T., Dejus, M.,
  Makarau, A., Auer, S.: Maccs-atcor joint algorithm (maja). In: Remote Sensing
  of Clouds and the Atmosphere XXI. vol. 10001, p. 1000107 (2016)

\bibitem{muzellec2020missing}
Muzellec, B., Josse, J., Boyer, C., Cuturi, M.: Missing data imputation using
  optimal transport. In: International Conference on Machine Learning. pp.
  7130--7140. PMLR (2020)

\bibitem{nyborg2021timematch}
Nyborg, J., Pelletier, C., Lefèvre, S., Assent, I.: Timematch: Unsupervised
  cross-region adaptation by temporal shift estimation. ISPRS Journal of
  Photogrammetry and Remote Sensing  \textbf{188},  301--313 (2022)

\bibitem{peyre2019computational}
Peyr{\'e}, G., Cuturi, M.: Computational optimal transport. Foundations and
  Trends{\textregistered} in Machine Learning  \textbf{11}(5-6),  355--607
  (2019)

\bibitem{VRADA}
Purushotham, S., Carvalho, W., Nilanon, T., Liu, Y.: Variational recurrent
  adversarial deep domain adaptation. In: International Conference on Learning
  Representations (2017)

\bibitem{AdvDA}
Redko, I., Habrard, A., Morvant, E., Sebban, M., Bennani, Y.: Advances in
  Domain Adaptation Theory. ISTE Press (2019)

\bibitem{coot_2020}
Redko, I., Vayer, T., Flamary, R., Courty, N.: Co-optimal transport. In:
  Advances in Neural Information Processing Systems. NeurIPS (2020)

\bibitem{sakoe1978dynamic}
Sakoe, H., Chiba, S.: Dynamic programming algorithm optimization for spoken
  word recognition. IEEE Transactions on Acoustics, Speech, and Signal
  Processing  \textbf{26}(1),  43--49 (1978)

\bibitem{vincent-cuaz2022semirelaxed}
Vincent-Cuaz, C., Flamary, R., Corneli, M., Vayer, T., Courty, N.: Semi-relaxed
  gromov-wasserstein divergence and applications on graphs. In: International
  Conference on Learning Representations (2022),
  \url{https://openreview.net/forum?id=RShaMexjc-x}

\bibitem{SurveyUDA}
Wilson, G., Cook, D.: A survey of unsupervised deep domain adaptation. ACM
  Journal  (2020)

\bibitem{wilson2020codats}
Wilson, G., Doppa, J.R., Cook, D.J.: Multi-source deep domain adaptation with
  weak supervision for time-series sensor data. In: ACM SIGKDD International
  Conference on Knowledge Discovery \& Data Mining. pp. 1768--1778 (2020)

\end{thebibliography}
\end{document}


\maketitle

This document contains part of the Supplementary Material for the ``Match-And-Deform'' paper, namely additional details that could not fit in the paper as well as proofs for mathematical properties related to MAD.

The Supplementary Material for this paper also includes code to reproduce the experiments presented in the paper as well as miniTimeMatch and TarnBZH datasets.

\section{Proofs}

\begin{property}\label{prop:prop1}
Let $\X \in \R^{n \times T \times q}$ and $\X^\prime \in \R^{n^\prime \times T^\prime \times q}$ be time series datasets.
Let $\text{OT}_\text{DTW}(\X, \X^\prime)$ be the solution of the OT problem with cost $\mathbf{C}(\X, \X^\prime) = \{\text{DTW}(\x,\x^\prime)\}_{\x,\x^\prime \in \X, \X^\prime}$, and let us denote $cost(\cdot)$ the cost associated to a solution of any optimization problem.
We have:
\begin{eqnarray}
    \text{cost}(\text{OT}_\text{DTW}(\X, \X^\prime)) &\leq& \text{cost}(|\mathcal{C}|\text{-MAD}(\X, \X^\prime, \Y)) \nonumber \\
    \text{cost}(|\mathcal{C}|\text{-MAD}(\X, \X^\prime, \Y)) &\leq& \text{cost}(\text{MAD}(\X, \X^\prime)) \nonumber
\end{eqnarray}
\end{property}

\begin{proof}
Let us start with the first inequality.
To do so, let us denote by $\GGs^\star,(\GGt^{(1) \star}, \cdots, \GGt^{(c) \star})$ the optimal coupling matrices resulting from $|\mathcal{C}|\text{-MAD}(\X, \X^\prime,\Y)$.
The cost associated to $|\mathcal{C}|\text{-MAD}(\X, \X^\prime, \Y)$ is:
$$
    \sum_{i,j} \gamma^\star_{ij} \underbrace{\sum_{\tsource, \ttarget} L^{i,j}_{\tsource, \ttarget} \pi^{(y_i) \star}}_{A(\x^i, \x^{\prime j})}
$$
By definition of the DTW, we have $A(\x^i, \x^{\prime j}) \geq \text{DTW}(\x^i, \x^{\prime j})$, which gives:
$$
    \text{cost}(|\mathcal{C}|\text{-MAD}(\X, \X^\prime, \Y)) \geq \sum_{i,j} \gamma^\star_{ij} \text{DTW}(\x^i, \x^{\prime j})
$$
Once again, since $\GGs^\star$ is a valid transportation plan, the right-hand term above is greater than the cost of the OT problem that relies on DTW as its inner cost, which proves our first inequality.

Let us now prove that $|\mathcal{C}|$-MAD upper bounds MAD.
To do so, let us now denote $\GGs^\star,\GGt^\star$ the optimal coupling matrices resulting from $\text{MAD}(\X, \X^\prime)$.
Let us observe that $\GGs^\star, \underbrace{(\GGt^\star, \cdots, \GGt^\star)}_{|\mathcal{C}| \text{ times}}$ is a valid solution candidate for the $|\mathcal{C}|$-MAD problem.
As a consequence, the $|\mathcal{C}|$-MAD cost is lower than that of MAD, which concludes the proof.
\end{proof}

To assert empirically\footnote{Notebooks for empirical proofs of properties 1 and 2 can be found with the code} Property~1, we compare the costs of $\text{OT}_\text{DTW}$, $|\mathcal{C}|\text{-MAD}$ and $\text{-MAD}$. We draw two random datasets of size $(50 \times 20 \times 2)$ with 5 classes. We compute the respective costs of the three methods over these two datasets and compared over $2,000$ repetitions. Figure~\ref{fig:proof1} shows that the cost of $\text{OT}_\text{DTW}$ is always the smallest  while the cost of \MAD{} is always the greatest among the 3 methods..

\begin{figure}
    \centering
    \includegraphics[width=0.5\textwidth]{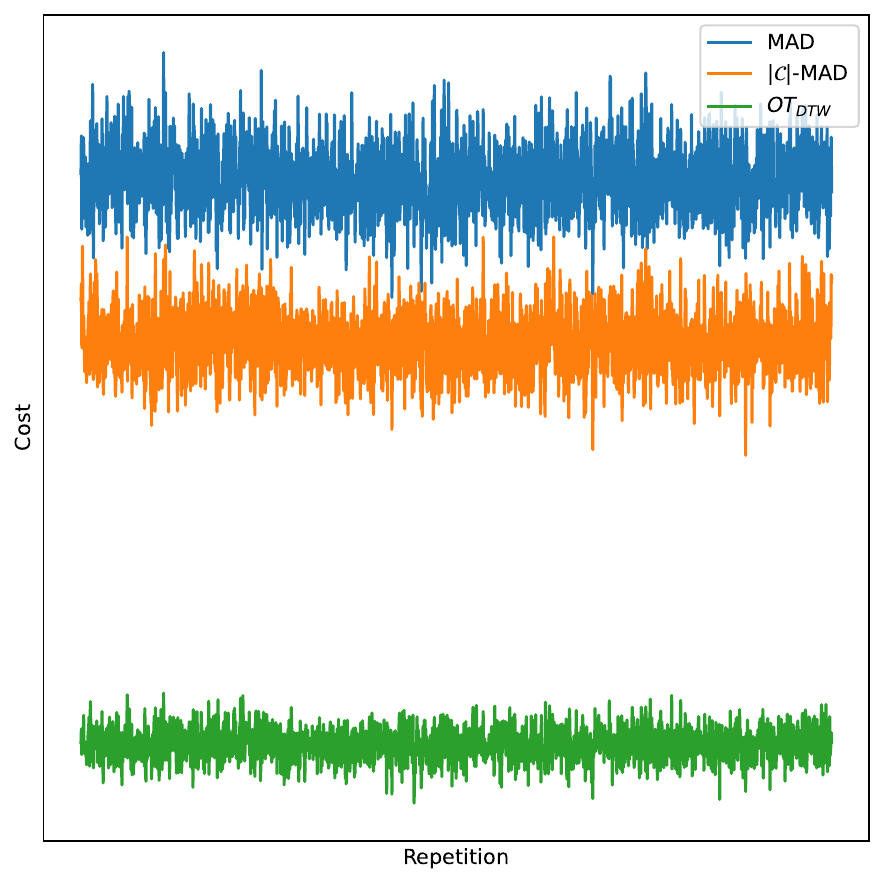}
    \caption{Costs of $\text{OT}_\text{DTW}$, $|\mathcal{C}|\text{-MAD}$ and $\text{MAD}$ over $2,000$ random datasets of size $(50 \times 20 \times 2)$.}
    \label{fig:proof1}
\end{figure}

For our second proof, we will rely on the following Lemma:

\begin{lemma}
\label{lemma}
The MAD optimization problem can be re-written:
$$
\text{MAD}(\X, \X^{\prime}) =
      \argmin_{\GGt\in\mathcal{A}(T, T^{\prime})}
      \underbrace{\argmin_{\GGs \in\Gamma(\w,\w^{\prime})} \langle \L(\X,\X^{\prime}) \otimes \GGt, \GGs \rangle}_{\mathbf{\text{OT}}_\text{MAD}(\X, \X^\prime, \GGt)}
$$
\end{lemma}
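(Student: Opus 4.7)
The plan is to exploit the elementary fact that joint minimization over a Cartesian product constraint set decomposes into nested minimizations. I first recall the definition of the MAD problem itself: it jointly optimizes over a sample-level coupling $\GGs$ and a temporal alignment $\GGt$, minimizing the bilinear objective $\langle \L(\X, \X^{\prime}) \otimes \GGt, \GGs \rangle$ subject to $\GGs \in \Gamma(\w, \w^{\prime})$ and $\GGt \in \mathcal{A}(T, T^{\prime})$.

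The key observation is that neither constraint set depends on the other variable, so the joint feasible region is exactly the product $\Gamma(\w, \w^{\prime}) \times \mathcal{A}(T, T^{\prime})$. The standard identity
$$
\min_{(\GGs, \GGt) \in X \times Y} f(\GGs, \GGt) \;=\; \min_{\GGt \in Y} \min_{\GGs \in X} f(\GGs, \GGt)
$$
then applies directly; provided minimizers exist, it lifts to the corresponding $\argmin$ statement for the outer variable $\GGt$. Recognising the inner minimization as precisely $\text{OT}_\text{MAD}(\X, \X^{\prime}, \GGt)$ as defined in the lemma statement then yields the claimed rewriting.

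Since $\Gamma(\w, \w^{\prime})$ is a compact polytope and $\mathcal{A}(T, T^{\prime})$ is a finite set of monotone alignment paths, and since the objective is continuous---indeed, bilinear---in each argument with the other fixed, minima are attained at every stage and no measurability issues arise. The only subtlety worth flagging is the $\argmin$ versus $\min$ distinction: the inner $\argmin$ in the statement has to be read as also producing the optimal value, so that the outer $\argmin$ is well-defined as the minimizer of a scalar function of $\GGt$. Once this notational point is clarified, the proof reduces to a direct invocation of the product-set minimization identity, so I do not expect any genuine obstacle beyond making the correspondence with $\text{OT}_\text{MAD}$ explicit.
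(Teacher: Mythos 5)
Your proposal is correct and follows essentially the same route as the paper: both reduce the joint minimization over the product feasible set $\Gamma(\w,\w^{\prime})\times\mathcal{A}(T,T^{\prime})$ to a nested minimization, with the finiteness of $\mathcal{A}(T,T^{\prime})$ (plus, in your version, compactness of $\Gamma(\w,\w^{\prime})$) guaranteeing that all minima are attained. Your added remark on the $\argmin$ versus $\min$ reading is a reasonable notational clarification but does not change the substance of the argument.
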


\begin{proof}
This new formulation straight-forwardly derives from the observation that the set $\mathcal{A}(T, T^\prime)$ of admissible alignments involved in MAD is finite, hence the joint optimization can be seen as seeking the minimum $\text{OT}_\text{MAD}$ cost along enumeration of the admissible alignments in $\mathcal{A}(T, T^\prime).$
\end{proof}
Note that a similar Lemma could be obtained for $|\mathcal{C}|$-MAD using the exact same argument. We omit it here for the sake of brevity.

\begin{property}
Let $\X$ and $\X^\prime$ be datasets each composed of $n$ time series, and let us assume that uniform weights are employed, \emph{i.e.} $\w=\w^\prime=(1/n, \cdots, 1/n)$.
There exists a transportation plan solution to the MAD (resp. $|\mathcal{C}|$-MAD) problem that is a one-to-one matching, \emph{i.e.} each sample from $\X$ is matched to exactly one sample in $\X^\prime$ (and conversely).
\end{property}

\begin{proof}
We will prove this property for MAD, knowing that the exact same reasoning can be employed for $|\mathcal{C}|$-MAD.

Let us use the re-writing of the MAD optimization problem given by Lemma~\ref{lemma}.
For any $\GGt$, the problem $\mathbf{\text{OT}}_\text{MAD}(\X, \X^\prime, \GGt)$ is an Optimal Transport problem with cost $\L(\X,\X^{\prime}) \otimes \GGt$.
One can hence apply Proposition 2.1 from~\citet{peyre2019computational} to prove that each of these OT problems admits a solution that is a permutation matrix.
We can deduce that there exists a transportation plan solution to the MAD problem that is a permutation matrix, \emph{i.e.} a transportation plan in which each sample from $\X$ is matched to exactly one sample in $\X^\prime$ (and conversely).
\end{proof}

To assert empirically Property~2, we compare the number of matching given by the MAD transport plan to the number of series on simulated datasets. We draw two datasets $1,998$ number of times from size $(2 \times 20 \times 2)$ to size $(2000 \times 20 \times 2)$, increasing gradually the number of series in the datasets. Whatever the data size, the number of matching indeed corresponds exactly to the number of series in the datasets.

\section{Experimental details}

\noindent \textbf{Experimental tools.} All experiments concerning DeepJDOT-DTW, MAD, $|\mathcal{C}|$-MAD  were ran using CPU only on $3,000$ iterations on two Xeon intel 20 E5-2687W CPUs. CoDATS and baselines "\textit{No adaptation}" and "\textit{Target only}" where trained over $30,000$ iterations using a GPU NVIDIA TITAN RTX.

\noindent \textbf{Dataset statistics.}
Table~\ref{tab:recap_data} reports a brief description of the datasets used throughout the paper and in this Supplementary Material.
\begin{table*}[ht]
  \centering
  \begin{tabular}{|l|l|l|l|l|l|}
    \hline
    Dataset & domains & $\#$ samples & TS length & dim & $\#$ classes  \\
    & & per domain & & & \\\hline \hline
    HAR      & 15 domains, as in \citet{wilson2020codats} & $[288;409]$ & 128 & 9 & 6 \\
    miniTimeMatch       & DK1, FR1, FR2, AT1 & 21,648& 52; 62; 39; 58 & 10 & 7 \\
    TarnBZH & Tarn, Brittany & 5,743 & 72; 58 & 10 &5\\\hline
    
  \end{tabular}
  \caption{Description of the domain adaptation datasets. When figures are given into brackets, it corresponds to the minimum and maximum values over the dataset.\label{tab:recap_data}}
\end{table*}

\noindent \textbf{Influence of parameters $\alpha$ and $\beta$.}
\label{sec:expe:hp}

In all experiments presented in the paper, a fixed value is used for parameters $\alpha$ and $\beta$.
The sensitivity of MAD to these hyper-parameters is now investigated and illustrated on the TarnBZH domain adaptation problems. 
First, $\alpha$ is set to $10^{-1}$ and accuracy for a range of values for $\beta$ are reported on Figure~\ref{fig:hp_effect} (right). It shows that accuracy remains stable across $\beta$ values, for both adaptation problems.
It may appear like being non-critical to the overall performance.

Then, $\beta$ is set to $10^{-2}$ and accuracy for a range of values for $\alpha$ are reported on Figure~\ref{fig:hp_effect} (left).
The impact of $\alpha$ is a bit stronger and, as expected, setting $\alpha$ to 0 (and hence not taking temporal information into account) leads to much degraded performance.
For a large range of $\alpha$ values, however, accuracy remains stable. 
Finally, one can see that an $\alpha$ value too large diminishes the overall performance by not putting enough emphasis on the classification performance.

\begin{figure*}[!h]
    \centering
    \includegraphics[width=.45\linewidth]{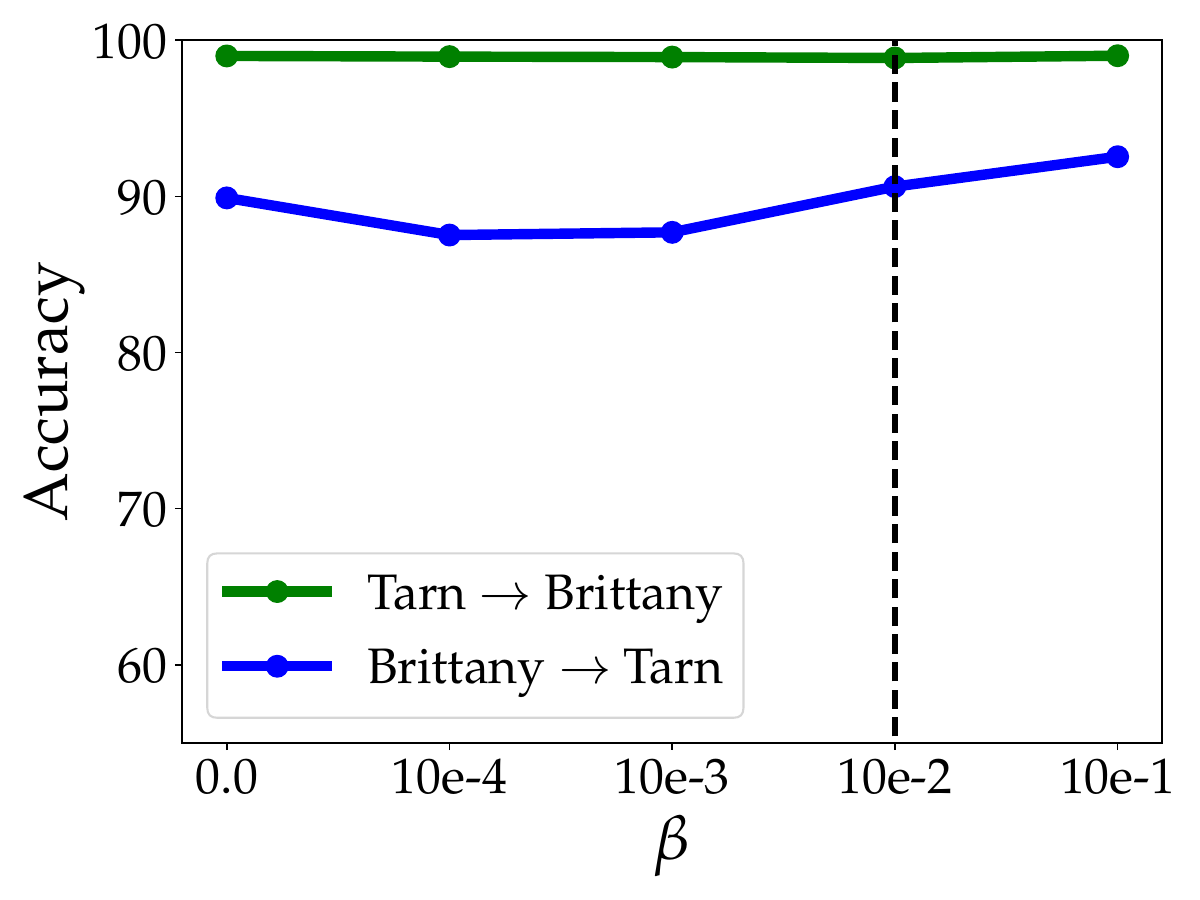}
    \includegraphics[width=.45\linewidth]{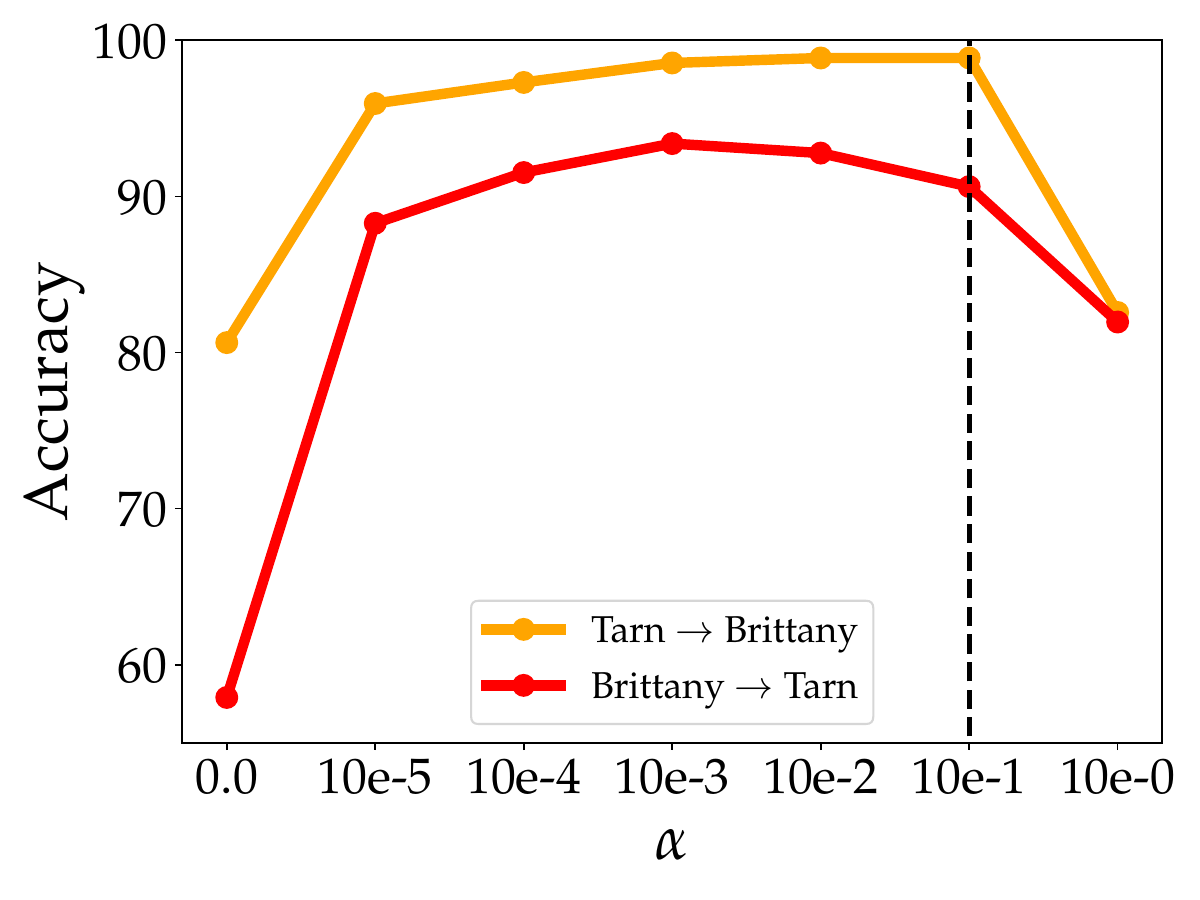}
    \caption{Classification performance as a function of (left) $\beta$ (with $\alpha=10^{-1}$) and (right) $\alpha$ (with $\beta=10^{-2})$.
    Vertical dashed lines correspond to the values used in all other experiments in this paper.
    }
    \label{fig:hp_effect}
\end{figure*}  

\bibliographystyle{splncs04}

\bibliography{biblio.bib}